\documentclass[11pt]{article}

\usepackage{libertine}
\usepackage[T1]{fontenc}
\usepackage{amsmath,amsthm}
\usepackage{booktabs}
\usepackage{multirow}
\usepackage{graphicx}
\usepackage{xcolor}

\usepackage{arxivs}
\usepackage{amsfonts,amssymb}
\usepackage{mathrsfs}

\usepackage{cite}
\usepackage{hyperref}
\usepackage{textcomp}
\usepackage{tabularx}
\usepackage{caption}
\usepackage{subcaption}
\usepackage{diagbox}
\usepackage{algorithm}
\usepackage{algorithmicx}
\usepackage{algpseudocode}
\usepackage{bm}

\usepackage[section]{placeins}   
\newtheorem{theorem}{Theorem}
\newtheorem{proposition}[theorem]{Proposition}

\theoremstyle{definition}
\newtheorem{definition}[theorem]{Definition}

\theoremstyle{remark}

\newcommand{\Eb}{\mathbb{E}}

\newcommand{\Rb}{\mathbb{R}}

\newcommand{\Ac}{\mathcal{A}}

\newcommand{\Fc}{\mathcal{F}}
\newcommand{\Jc}{\mathcal{J}}
\newcommand{\Kc}{\mathcal{K}}
\newcommand{\Lc}{\mathcal{L}}
\newcommand{\Pc}{\mathcal{P}}
\newcommand{\Sc}{\mathcal{S}}

\newcommand{\sys}{\text{sys}}

\newcommand{\kc}{\kappa}

\newcommand{\one}{\mathbb{I}}
\newcommand{\onen}{\mathbf{1}}

\title{Fairness in Multi-class Multi-group Classification Problems via Contextual Coherent Risk Measures}

\author{Darinka Dentcheva \\
        Department of Mathematical Sciences \\
        Stevens Institute of Technology \\
        Hoboken, NJ 07030, USA \\
        \texttt{ddentche@stevens.edu} \\
        \And
        Xiangyu Tian \\
        School of Business \\
        Stevens Institute of Technology \\
        Hoboken, NJ 07030, USA \\
        \texttt{xtian9@stevens.edu}
        }

\date{}

\begin{document}

\maketitle

\begin{abstract}
We propose a new design of fair classifiers for multi-class classification problems in the presence of vector-valued sensitive attributes. In that scenario each sensitive attribute has multiple values and forms several groups relevant to the fairness consideration. Naturally those groups are overlapping and one should also analyze the interaction of factors. Additionally, the decision makers aided by the classification should not violate individual rights at the expense of satisfying fairness metrics at the group level. We propose an approach using the theory and methods of coherent measures of risk aiming at resolving the fairness challenges. Further, we propose a specialized numerical method for solving the resulting optimization problem. The method scales well with the increase of the number of observations. Additionally, we note that the obtained classifier is robust with respect to corrupted data or to situation when data is scarce. We demonstrate the advantages of the proposed framework in comparison to the support-vector machine framework and other methods handling fairness.
\end{abstract}

\keywords{
coherent risk measures, contextual risk, systemic risk, stochastic programming, fairness
}

\section{Introduction}

Fairness has attracted significant attention in the machine-learning community, highlighting concerns about biased outcomes in automated decision-making. The algorithms used in job recruiting, credit risk assessment, and beyond need to be carefully designed to avoid propagating biases.

Different applications motivate different fairness notions. The choice of definition reflects the domain and the harms one aims to prevent. 
We discuss the choice of fainess metric in due course. 

In our paper, we propose a multi-class multi-group method, which directly handles the multi-class classification problem and pays attention regarding the fairness among multiple groups associated with a sensitive attributes including intersectional groups. Furthermore, we can show that our approach results in individual fairness within each group.

Work on algorithmic fairness interventions is often grouped into three approaches, that can be summarized as follows.  
\begin{itemize}
\item Methods modifying the data \cite{feldman2015certifying, kamiran2012data, zemel2013learning}. These data-modification (pre-processing) methods enforce fairness by changing the training data distribution before learning, so that standard classifiers trained on the processed data are less likely to reproduce group disparities. Typical strategies include re-weighting or resampling examples to balance group–label combinations, relabeling a small set of borderline cases to reduce observed discrimination, and “repairing” or learning fair representations that remove (or weaken) information correlated with sensitive attributes while preserving predictive signal. In short, they aim to make the dataset itself less biased, so fairness improvements hold across many downstream models.
\item Methods modifying the learner/objective \cite{rychener2022metrizing, roh2020fairbatch, donini2018empirical, fish2016confidence, wang2024wasserstein}. These methods usually enforce fairness inside the optimization problem itself: they keep the training data largely as-is, but change what the model is asked to minimize (or the way SGD “sees” the data). The approaches include adding fairness constraints, fairness regularizers, or fairness-aware training dynamics to the model or training process.
\item Methods adjusting the predictions \cite{chzhen2020fair, pleiss2017fairness}.  These methods train the model as usual, and then enforce a fairness notion by transforming its outputs (scores/probabilities/labels)—often using the sensitive attribute at prediction time—via group-specific thresholds, score mappings, or controlled randomization, so the final predictions satisfy a desired constraint.
\end{itemize}

While the proposed approached have led to substantial progress, most of them do not discuss the potential issue of data corruption and provide robustness. In practice, however, fairness goals are sometimes pursued in settings where data are noisy, limited, or systematically imperfect: labels may be corrupted, sensitive attributes may be missing or  inaccurately measured, and minority groups may be underrepresented, making constraint satisfaction unstable. This motivates studies such as \cite{9797363,xu2021robust,wang2024wasserstein,fairclass2021celis,taskesen2020distributionally,du2021fair}. These approaches usually aim to incorporate fairness directly into adversarial training, enforce fairness constraints that remain valid under adversarial perturbations, and use distributionally robust optimization to guard fairness guarantees against plausible shifts in the data-generating distribution.

In this paper, our goal is to analyze the classification methods based on coherent measures of risk, which were proposed \cite{dentcheva2025risk}. Our main focus is to enforce fairness in machine learning and provide robustness at the same time via the notions of contexual risk and systemic risk aggregator measures.  We have proposed the concept of contextual risk in \cite{dentcheva2025risk}, where conditional risk are evaluated given contexts. In the context of fairness as associated contexts with the values of the sensitive attributes.  We note that coherent measures of risk come with built-in robustness while remaining amenable to efficient computation as we shall elaborate in due course. We use nonlinear risk aggregation, which enhances robustness against inaccurate data while enabling fairness enforcement. We point out that our method provides a structured way to handle multi-class/multi-group fairness problem. Moreover, our work does not focus on hard-enforcement of fairness such as directly constraining the gap between fairness metrics of different groups. Instead, many coherent measures of risk introduce an implicit penalty on the difference of misclassification risks of each group. The effect is that  each group is treated fairly during the training process, while still focusing on giving accurate predictions even in presence of corrupted data. Furthermore, we show theoretically and numerically that the outcomes of the proposed framework are consistent with many inequality indices known in economics, that are based on widely accepted axioms.  
In this way our approach facilitates enforcement of fairness under complex situations, which is  
confirmed by our numerical experiments. While the extant literature accepts drop of performance for fairness-aware classifiers (see, e.g., \cite{rychener2022metrizing}), we observe high performance of the proposed classifiers for the same or better fairness outcomes.

\section{Coherent measures of risk for groups' and systems' losses }

The risk of a scalar random variable is evaluated by a risk functional, that complies with an accepted set of axioms, which were proposed in \cite{artzner1999coherent} ina special case. Extensions, analysis and further theoretical advances are  presented in \cite{Delbaen,Follmer, RS:2005,RuSh:2006a,PflRom:07, DDARriskbook} and many other works. 
We refer to \cite{DDARriskbook} for an extensive treatment of the theory of risk measures, other risk-averse models, and methods for optimization under uncertainty and risk; see also the references ibid. 

We work with the space of random vectors with realizations in $\Rb^M$, defined on the probability space $(\Omega,\Fc, P)$, which have finite $p$-th moments, $p \in [1, \infty)$, and are indistinguishable on events with zero probability; this space is denoted $\Lc_p(\Omega, \Fc, P;\Rb^M)$. We assume that the random variables represent losses. A counterpart axioms and results exists for random variables representing gains. We focus on the former preferences as we plan to evaluate the risk in classification. 
    
    When we deal with scalar-valued random variables ($M=1$), then a lower semi-continuous functional $\varrho: \Lc_p(\Omega, \Fc, P; \Rb) \to \Rb\cup\{+\infty\}$ is called a \textit{coherent risk measure} if it is convex, positively homogeneous, monotonic with respect to the a.s. comparison of random variables, and satisfies the following translation property:
    \[
        \varrho[Z + a] = \varrho[Z] + a \text{ for all } Z \in \Lc_p(\Omega, \Fc, P),\; a \in \Rb. 
    \]
    If $\varrho[\cdot]$ is monotonic, convex, and satisfies the translation property, it is called a \emph{convex} risk measure. The theory of risk measures for scalar-valued random variables has reached advanced state of development including numerical methods for solving the stochastic optimization problems involving those measures. Less work is associated with high dimensional risks, that is, measures of risk for random vectors, or risk of complex systems with heterogeneous sources of risk.  The need for special attention to random vectors arises in part from the challenge that univariate measures of risk are not additive, unlike the expected value functional.

    We denote the $n$-dimensional vector, whose components are all equal to one by $\onen$, and the random vector with realizations equal to $\onen$ by $\one$. We adopt the following definition, introduced in \cite{almen2024risk,DDARriskbook}: 
    \begin{definition}
    \label{d:riskonvectors}
        A lower semi-continuous functional $\varrho: \Lc_p(\Omega, \Fc, P;\Rb^M) \to \Rb\cup\{+\infty\}$ is a systemic coherent risk measure with preference to small outcomes if it satisfies the following properties:
        \begin{itemize}
            \item[A1.] \emph{Convexity:} For all $Z, Z' \in \Lc_p(\Omega, \Fc, P;\Rb^M)$ and $ \alpha \in (0,1)$, the following holds\\
                $ ~ \varrho[\alpha Z + (1-\alpha)Z'] \leq \alpha \varrho[Z] + (1 - \alpha)\varrho[Z']. $
            \item[A2.] \emph{Monotonicity:} For all $Z, Z' \in \Lc_p(\Omega, \Fc, P;\Rb^M)$, if $Z_i \geq Z'_i$ for all components $i = 1, \dots, M$, $P$-a.s., then $\varrho[Z] \geq \varrho[Z']$.
            \item[A3.] \emph{Positive homogeneity:} For all $Z \in \Lc_p(\Omega, \Fc, P;\Rb^M)$ and all constants $t > 0$, the relation $\varrho[tX] = t\varrho[X]$ holds.
            \item[A4.] \emph{Translation equivariance:} For all $Z \in \Lc_p(\Omega, \Fc, P;\Rb^M)$ and for all constants $a \in \Rb$, the following relation holds $\varrho[Z + a\one] = \varrho[Z] + a\varrho[\one].$
        \end{itemize}
    \end{definition}
 It is shown in \cite{almen2024risk,DDARriskbook} that any systemic risk measure $\varrho$, which is proper, lower semicontinuous, and satisfies those axioms, can be represented as follows:
    \begin{equation}
        \varrho[Z] = \sup_{\zeta \in \Ac_\varrho} \langle \zeta, Z \rangle.
        \label{general dual}
    \end{equation}
    The set $\Ac_\varrho\subset \Lc_q(\Omega, \Fc, P;\Rb^M)$ with $\frac{1}{p} + \frac{1}{q} =1$ is defined as:
    \[
        \Ac_\varrho = \Big\{ \zeta \in \Lc_q(\Omega, \Fc, P;\Rb^M) ~|~ \int_\Omega \zeta(\omega) dP(\omega) = \mu_\zeta, ~ \zeta \geq 0 \text{ a.s.},\;  \langle \one, \mu_\zeta \rangle =r \Big\},
    \]
    where $r\in\Rb$ is a constant. The set $\Ac_\varrho$ is the convex subdifferential $\partial \varrho(0)$ of the risk measure. A systemic measure of risk $\varrho$ is normalized if $\varrho[\one] = 1$, in which case,  $r =1$ for all $\zeta \in \Ac_\varrho$.
    In the special case when $N=1$, we obtain the widely used dual representation of coherent measures of risk for scalar-valued random variables 
    \begin{equation}
        \varrho[Z] = \sup_{\frac{dQ}{dP} \in \Ac_\varrho} \Eb_Q [ Z ],
        \label{eq:scalar dual}
    \end{equation}
    where $\frac{dQ}{dP}$ is the Radon-Nikodym derivative of the measure $Q$ with respect to the reference measure $P$. The dual representation \eqref{general dual} shows the link between minimizing a coherent measure $\varrho(Z)$ to a distributionally robust optimization (DRO) problem.  The DRO problem with a loss function $L(Z(\theta, \omega))$  and an ambiguity set $\Ac \subset\Pc (\Omega)$ is formulated as 
    \begin{equation}
    \min_{\theta}\ \sup_{Q\in\Ac}\ \Eb_Q \big[L (Z(\theta,\omega))\big].
    \label{eq:DRO}
    \end{equation}
    A widely used coherent risk measure for scalar-valued random variables is the well-known Mean - Upper Semideviation, which is defined as follows
    \[
    \varrho(Z) = \Eb[Z]+\kappa\big\|(Z-\mathbb{E}[Z])_{+}\big\|_p,\; \text{ where  } \kappa\in[0,1].
    \]
    Here $\|\cdot\|_p$ stands for the $p$-norm in $\Lc_p(\Omega, \Fc, P;\Rb)$  and $(a)_+ = \max(0,a)$. 
    The Mean - Upper Semideviation has the dual representation \eqref{eq:scalar dual} with the dual set 
    \[
    \Ac_\varrho =\left\{Q\ll P:\;\tfrac{dQ}{dP}=\one+\zeta-\mathbb{E}[\zeta]\one,\; \|\zeta\|_q \leq \kappa,\; \zeta \geq 0\right\}, 
    \]
    i.e, $Q$ are probability measures with densities w.r.to $P$ given by $\one+\zeta-\mathbb{E}[\zeta]\one.$
    Notice that the parameter $\kappa$ controls the risk-averse level of the objective; larger value of $\kappa$ allows larger weight-differences in $\zeta$, leading to a larger ambiguity set.

Consider a finite probability space $(\Omega_M,\Fc_M, \pi)$, where $\Omega_M = \{1,\dots,M\}$, $\pi$ is a probability mass function, and $\Fc_M$ contains all subsets of $\Omega_M$. Given a collection of $M$ measures of risk $\varrho_i:\Lc_p(\Omega,\Fc, P;\Rb)\to\Rb$, $i\in\Omega_M $,  we associate with a random $M$-dimensional vector $Z$ defined on $(\Omega,\Fc, P)$, a random variable $R_Z$ on the space $\Omega_M$ as follows. The realizations of $R_Z$ are 
    \begin{equation*}
        \label{R_Z}
        R_Z(i) = \varrho_i(Z^i),\quad i\in\Omega_M. 
    \end{equation*}
 Let a coherent measure of risk $\varrho_0:\Lc_\infty(\Omega_M,\Fc_M, \pi)\to\Rb$  be chosen as an aggregator. We define the measure of systemic risk 
     $\varrho_{\rm sys}: \Lc_p(\Omega, \Fc, P; \Rb^M)\to\Rb$ as follows:
    \begin{equation}
         \varrho_{\rm sys} (Z) = \varrho_0(R_Z). 
         \label{syst_risk2} 
    \end{equation} 
This type of measure satisfies the axioms in Definition~\ref{d:riskonvectors} as shown in \cite{almen2024risk}.

We give two examples of systemic risk-measures that will become useful in our further developments.

We may choose as $\varrho_0$ to be the mean-upper-semideviation risk measure of order $q$ ($q\geq 1$) and evaluate all components of the error vector $Z$ by the same law-invariant coherent measure of risk $\varrho(\cdot)$. The description of the total risk evaluation is the following: 
    \begin{equation}
    \label{d:outer-MSD-p}
  \varrho_{\rm sys}(Z)  = \sum_{i\in\Omega_M } \pi_i \varrho[Z^i] + \kappa \bigg(\sum_{i\in\Omega_M } \pi_i \Big(\varrho(Z^i) - \sum_{i\in\Omega_M } \pi_j \varrho(Z^j) \Big)_+^q\bigg)^\frac{1}{q}      
    \end{equation} 
    with $\kappa\in[0,1]$.
 When $q=1$, the expression simplifies as follows
 \begin{equation}
 \label{d:outer-MSD-1}
 \begin{aligned}
\varrho_{\rm sys}^1(Z) &= \sum_{i\in\Omega_M }  \pi_i\bigg( \varrho[Z^i] + \kappa \sum_{j=1}^M \pi_j\big(\varrho[Z^i] - \varrho[Z^j]\big)_+ \bigg)\\
& = \sum_{i\in\Omega_M }  \pi_i \varrho[Z^i] + \kappa  \sum_{\substack{i,j=1\\j\neq i}}^M \pi_i\pi_j\big|\varrho[Z^i] - \varrho[Z^j]\big| 
     \end{aligned}
       \end{equation}
    This representation shows that the individual risks of the components are aggregated with an additional penalty on the deviation of the individual risks from that average risk. This property is crucial to our treatment of fairness in classification. 

    Another example would be to aggregate the individual risks  by a convex combination of the expected value and the Average Value-at-Risk at some level $\alpha$; again all components of the error vector $Z$ may be evaluated by the same law-invariant coherent measure of risk $\varrho(\cdot)$.  
    For any $\kappa \in [0,1]$, this systemic measure of risk takes on the form:
    \begin{equation}
    \label{e:avar-comb}
        \varrho_\sys^2 (Z)  = (1-\kappa) \sum_{i\in\Omega_M} c_i \varrho (Z^i) + 
                          \kappa \inf_{\eta \in \Rb} \Big\{  \eta + \frac{1}{\alpha} \sum_{i\in \Omega_M} c_i\big(\varrho (Z^i) - \eta\big)_+ \Big\}.
    \end{equation}
    Here, the infimum with respect to $\eta \in \Rb$ is taken over the individual risks of the components $\varrho (Z^i)$, $i\in \Omega_M$. Hence, this method of aggregation imposes additional penalty for the components whose risk exceeds a certain threshold, which can be shown to be the $\alpha$ -quantile of the distribution of $R_Z$.

    We note that the linear aggregation is a special case of a systemic risk measure because one can choose  $\varrho_0(R_Z) = \Eb_\pi(R_Z) = \sum_{i\in\Omega_M } \pi_i \varrho(Z^i)$. The expectation is the simplest coherent measure of risk. 
    In each case, we could also use different risk measure for each component $Z^i$, should we have a reason to do so.

\section{Fairness notions and challenges} 
\label{sec:evaluating_fairness}

In order to discuss fairness, one needs to identify a definition agreeable for the application it is discussed.

Naturally, the fairness definitions and the associated algorithmic ideas are first developed in a binary classification scenario, as it answers most pertinent questions such as will a given loan be re-paid; is a project likely to success; is a treatment effective for the individual, etc. 

For a binary decision,  we consider fair-sensitive feature with values $\mathcal{S}={1,\dots, S}$, which is present in the data points to be classified. The labels are $Y=1$ (favorable) and $Y=0$ (unfavorablle).  
The notion of \emph{demographic parity} requires that the classifier outputs a decision $\hat{Y}$ such that the probability 
$P(\hat{Y}=1 | G=s) $ is the same across $s\in\Sc$. 
Here, $P$ is the frequency within the given dataset used as an estimation of the true conditional probability. 

Another notion is so called \emph{equalized odds}, which requires that the true positive rates and false positive rates are both equal, i.e. for all $s_1,s_2\in\Sc$, the following relations are satisfied.
\begin{gather*}
P(\hat{Y}=1 | Y=1, G=s_1) = P(\hat{Y}=1 | Y=1, G=s_2)\\
  P(\hat{Y}=1 | Y=0, G=s_1)= P(\hat{Y}=1 | Y=0, G=s_2).  
\end{gather*}
The notion of \emph{equal opportunity} requires only the first equality, regarding the true positive rates (TPR). 

The multi-class setting is less investigated and the binary situation cannot be carried to it in a straightforward manner, as there are no true positive/true negative predictions (or false positive/false negative). The two standard approaches are to define fairness per-class (one-vs-rest, giving a rate for each class $\times$ group and then aggregate. In some work the maximum difference across the various groups is taken and then averaged across classes, in others an average difference is calculated. In this way the fairness metric reduces to the classical one when there are two classes and two groups. In \cite{denis2024fairness}, the authors extend the definition of approximate fairness based on demographic parity to multi-class classification with two groups, i.e. $S=2$. They define an index of unfairness as the maximum difference between $P(\hat{Y}= k | Y=1, G=s_1)$ for $k$ varying over the classes. 
The authors impose demographic-parity constraints and compute multi-class thresholds from the Lagrange multipliers of them.  

In \cite{alghamdi2022beyond}, the authors modify the notion of demographic parity, equalized odds, and overall accuracy equality into a multi-class, multiple-protected-group setting by post-processing method based on information projection. 
The most unified treatment of multi-class multi-group setting as far as we could see so far is based on optimal transport. It is established that demographic-parity fair prediction is equivalent to a certain Wasserstein-barycenter problem (\cite{chzhen2020fair,jiang2020wasserstein}, see also \cite{hu2023fairness}). 
One phenomenon to be aware of when dealing with multiple groups is described in the paper \cite{kearns2018preventing}. It illustrates that treating one sensitive attribute at a time might lead to a model fair on each attribute but maximally unfair across intersectional groups. The authors propose auditing algorithms for subgroup fairness over combinatorially many subgroups defined by false-positive, false-negative, and classification rates. 

The individual fairness is a recurring critique of group-fairness criteria because a model can satisfy them at the group level while treating individuals \emph{within}
a group inconsistently, arbitrarily, or even worse than before the fairness constraint was imposed.  In \cite{dwork2012fairness}, the authors introduce individual
fairness --- ``treat similar individuals similarly,'' formalized as a Lipschitz condition on a task-specific similarity metric. Their motivating critique of demographic parity is that it constrains only group statistics and so permits within-group abuse: parity can be met by selecting the qualified members of one group and the unqualified members of
another, leaving group rates equal but individuals treated incoherently. In \cite{long2023individual} it is shown that fairness interventions optimized solely
for group fairness and accuracy can \emph{exacerbate} predictive multiplicity: the existence of many near-equivalent models that disagree on individual
predictions.  The authors find that some sort of arbitrariness is present in the predictions that could flip on retraining with a different random seed. They argue that ``arbitrariness'' should be considered alongside accuracy and group fairness.

In \cite{kim2018computationally} the authors propose a \emph{metric multifairness} framework to treat subpopulations similarly, where the subpopulations
range over a rich, possibly overlapping class of comparison sets. This is the
individual-fairness analogue of the multicalibration idea presented in \cite{hebertjohnson2018multicalibration}, which asks for
calibration to hold not just marginally per group but simultaneously across a rich
family of overlapping subgroups; both are attempts to push a guarantee down
from groups toward the individual level. 


Measures for within-group unfairness using the
inequality-index machinery known in economics are advocated in \cite{speicher2018unified}, where 
a model's unfairness is quantified with a generalized-entropy index over a per-individual
``benefit''. This index has the benefit of additivity, in which the overall individual-level unfairness decomposes
into a \emph{between-group} and a \emph{within-group} component. Let us emphasize that classical inequality measures in exonomics such as the Gini coefficient are based on axiomatic foundations. The work \cite{speicher2018unified} abuilds on it. The paper \cite{williamson2019fairness} proved that every fairness risk measure satisfying natural axioms is a coherent risk measure, establishing a connection to the risk measures used in our training formulation. Despite this progress, most of the fairness literature still reports point estimates of ad hoc metrics without statistical inference.

Finally, there is a discussion about the compatibility of individual and group fairness. 
In \cite{xu2024incompatibility}, the authors 
establish conditions under which optimal statistical-parity post-processing
is compatible with  the Lipschitz condition for individual fairness, introduced by \cite{dwork2012fairness} 
and, when the two conflict, identify the regions of the error--disparity Pareto
frontier that still satisfy the individual-fairness requirement.   



We start with a general definition. Consider a classifier $\psi$ applied to data with true labels $Y$ taking values in $\{1, \dots, M\}$ and a sensitive attribute $G$ taking values in $\Sc = \{1, \dots, S\}$. For each data point $X_i$, the classifier produces a prediction $\psi(X_i)$. The prediction is a random variable whose distribution may depend on both the true label and the group membership. We say that the classifier $\psi$ is \emph{fair} with respect to the sensitive attribute $G$ if, conditioned on the true label, the distribution of the predictions does not depend on the group. That is, for each point $(Y,X)$, the classifier $\hat{Y} = \phi(X)$ satisfies equality \eqref{eq:fairness_def} for all classes $k, c\in\{1,\dots,M\}$ and for all group attributes $s\in\Sc$: 
\begin{equation}
\label{eq:fairness_def}
P[\hat{Y} = k \mid Y = c,\, G = s] = P[\hat{Y} = k \mid Y = c] \quad .
\end{equation}
In words, knowing which group a data point belongs to provides no additional information about the prediction it receives, beyond what is already known from its true label. This requires not only that each group achieves the same per-class accuracy, but that the entire pattern of correct and incorrect classifications be independent of group membership. This definition generalizes equalized odds from binary to multi-class setting. In comparison,  \cite{alghamdi2022beyond} consider the true class $Y$ and the group $s\in\Sc$ and use a probabilistic classifier. They require that the entire per-group conditional confusion distribution match the group-marginal one by using a tolerance level $\alpha$ and setting:
\[
     \Big| \frac{P(\hat{Y} = k \mid Y = c,\, G = s)}{P(\hat{Y} = k \mid Y = c)} -1 \Big|\leq \alpha. 
\]
So for each true class $c$ and each predicted class $k$ the probability of predicting $k$ when the truth is $c$ must be nearly the same in a group $s$ as in the population. When the task is binary, this becomes equality of the false-positive and false-negative. 

The definition \eqref{eq:fairness_def} implies equal opportunity as a special case (take $c = 1$ and $k = 1$), but it additionally requires the misclassification patterns to be group-independent. Two groups could satisfy equal opportunity while having very different error distributions across the remaining classes while under \eqref{eq:fairness_def}, this would still constitute unfairness. We note that in binary classification with $M = 2$, the prediction distribution conditioned on $Y = 1$ is fully determined by the TPR, so equal opportunity and \eqref{eq:fairness_def} conditioned on $Y = 1$ are equivalent.

Frequently, when the sensitive attribute takes $S > 2$ values (e.g., race, age, education, country, etc), the Equal Opportunity is typically computed as a ratio of the smallest True Positive Rate (TPR) over the largest one. This uses information from only the two most extreme groups and ignores what happens in between. Two classifiers could have the same EO-ratio but very different treatment of the intermediate groups. In a $M$-class setting, the issue is more fundamental: equal recall for each class does not imply \eqref{eq:fairness_def}, since two groups could have the same recall but very different distributions of errors across the other classes. The standard metrics, which summarize each group by a single number per class, cannot detect such differences. Most importantly, these metrics are reported as point estimates without uncertainty quantification. An EO-ratio of $0.85$ might indicate a serious fairness violation if computed on $10000$ samples, or it might be entirely consistent with a perfectly fair classifier if computed on $50$ samples. The work \cite{cherian2023statistical} provides a comprehensive discussion of this gap, framing fairness auditing as a multiple hypothesis testing problem and showing that bootstrap-based inference can provide simultaneous confidence bounds across many subgroups.

We note also the work of \cite{rychener2022metrizing}, where it is proposed to measure unfairness via integral probability metrics, connecting fairness to the Wasserstein distance between group-conditional outcome distributions.  

In addition to the equalized odds and equality ratios, we use the classical Gini index of inequality. Furthermore, 
we propose to evaluate the validity of the equalities in \eqref{eq:fairness_def} by the classical statistical test of goodness of fit. The key observation is that the definition requires the conditional prediction distribution be the same across group. This is precisely the null hypothesis tested by Pearson's chi-square test of homogeneity. Fix a true class $c \in \{1, \dots, M\}$ and consider the test samples with true label $c$. Each sample belongs to one of $S$ groups and receives one of $M$ possible predictions, giving a $M \times S$ contingency table where cell $(k, s)$ contains the count $O_{ks}$ of samples from group $s$ that received prediction $k$. Under the null hypothesis the classifier map $\psi(\cdot)$ satisfies:
\[
H_0: \quad P[\psi(X_i) = k \mid Y_i = c,\, G_i = s] = P[\psi(X_i) = k \mid Y_i = c] \quad \text{for all } k, s,
\]
the expected count in cell $(k, s)$ is
\[
E_{ks} = \frac{R_k \cdot n_s}{n}, 
\]
where $R_k = \sum_s O_{ks}$ is the total number of samples predicted as class $k$, $n_s = \sum_k O_{ks}$ is the total number of samples in group $s$, and $N = \sum_{k,s} O_{ks}$ is the grand total. This formula follows from the definition of statistical independence: under $H_0$, the probability of falling into cell $(k,s)$ equals the product $P[\psi(X_i) = k \mid Y_i = c] \cdot P[G_i = s \mid Y_i = c]$. The Pearson chi-square statistic
\begin{equation}
\label{eq:chi2_fairness}
\chi^2 = \sum_{k=1}^{M} \sum_{s=1}^{S} \frac{(O_{ks} - E_{ks})^2}{E_{ks}}
\end{equation}
follows a $\chi^2\big((M-1)(S-1)\big)$ distribution asymptotically under $H_0$. A large value indicates that the prediction distributions differ significantly across groups, providing statistical evidence of unfairness. 

For a multi-class problem, we construct one contingency table per true class and obtain $M$ chi-square statistics with corresponding $p$-values. To control the family-wise error rate, we apply the Bonferroni correction: the classifier is declared fair at level $\alpha$ if all $M$ $p$-values exceed $\alpha / M$. That being said, if we are defining the fairness in a statistical parity style, where we do not care about a sample's true label, but only care about the probability of its prediction, the test would be even easier since we only need one test overall instead of one test per class.

\section{Contextual risk and fairness} 
\label{sec:contextual_risk_and_fairness}

We proposed risk-averse formulation based on contextual risk and aggregation of risk by outer risk measure with specific properties. For example, the mean-upper semi-deviation risk measure used as aggregator suggests that fairness within the learning model between the classes is forced. This can be categorized into the modifying learner/objective methods we discussed in the literature review. We have seen models that directly modify the loss function, such as \cite{rychener2022metrizing}, which uses a penalty term on the distance between the distributions of predictions within the two groups. Other works use extra constraints to force fairness, e.g. \cite{donini2018empirical}. Our two-stage model naturally involves an implicit penalty term within the formulation of the outer risk measure, If we use the mean-semideviation risk measure as aggregator, then it penalizes the deviation of the risk for each class from the average risk. When the Average Value-at-Risk at level $\alpha\in(0,1)$ is involved in the objective function, then  the deviation of the risk for each class from the $\alpha$-quantile of the risk distribution to classes is penalized.  In this way there is some similarity to \cite{rychener2022metrizing}. However, one issue arises: when considering sensitive groups or attributes, we are indeed assessing the risk with conditions; therefore, the behavior and properties of such conditional risks need to be well-defined.

We consider the scenario in which a fair-sensitive categorical feature with values $\mathcal{S}={1,\dots, S}$ is present in the data points to be classified. To make the prediction fair with respect to the groups within all classes, we introduce a contextual risk-measure $\varrho_c[Z]$. Assume that $Z=f(\vartheta,X,Y,G)$, where $\vartheta$ determines the classifier, $X$ stands for a random data point and $Y$ stands for the class label and $G$ indicates the value of said feature. Let the function $f:\Rb^{2n}\times\Jc\times \mathcal{S}\to \Rb^M$ be convex and monotonically non-decreasing with respect to the second argument for any value of the other arguments. We need to consider the classification error $Z^i$ in every context $s\in\mathcal{S}$ for all classes $i\in\Jc$.
Consider the probability space $(\Omega,\Fc,P_{XY})$ where $(X,Y)$ are defined. The measure $P_{XY}$ is disintegrated into the marginal distribution $\pi$ of $Y$ and the conditionals $P_i$ for $X| Y=i$. We consider the spaces 
\[
\mathcal{Y}_{i,s}=\big(\Omega,\Fc\cap \{ G=s\}, P_{is}\big),
\]  
where $P_{is}$ is the $P_i$-induced probability measure on $\Fc\cap \{ G=s\}.$
A contextual risk measure is a set of conditional coherent measures of risk evaluating the risk given a context. An aggregation measure provids the total risk for all contexts. 
More precisely, we fix a real-valued coherent risk measures $\rho_{i,s} [Z|Y=i, G=s]$ for each $i\in\Jc$ and $s\in\mathcal{S}$ defined on $\mathcal{Y}_{i,s}$ and choose 
$\varrho_c[Z^i]$ as an aggregation measure to obtain the total risk of each class over all contexts. Additional aggregation using an outer risk measure $\varrho_0[Z]$ will aggregate the risk over all the classes. We denote the distribution of all classes in the entire population by $\pi$, i.e., $\pi$ is a vector with $M$ positive components such that $\sum_{i=1}^M\pi_i = 1.$

\begin{proposition}
  Let  $\rho_{i,s} [\cdot]$ be coherent measures of risk for  all  $i\in\Jc$ and all $s\in\mathcal{S}$, 
risk measures $\varrho_{i}: \Lc_\infty(\mathcal{S},\Fc_S,  P_i|G=s)\to\Rb$ be coherent measures of risk for all $i\in\Jc$, and 
$\varrho_0:\Lc_\infty(\Omega_M,\Fc_M, \pi)\to\Rb$ be coherent as well. 
Denote
\begin{equation*}
	    V_c(i,s) = \rho_{i,s}[Z^i|G=s] \quad \text{and} \quad W(i) = \varrho_{i}[V_c(i,\cdot)],\quad s\in\mathcal{S},\; i\in\Jc.
\end{equation*}
 Then the risk measure $\varrho_\sys [Z]= \varrho_0 [W]$ satisfies axioms A1-A4 of systemic measure of risk. 
\end{proposition}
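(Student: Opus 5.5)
The plan is to verify axioms A1--A4 by composing properties through the three layers $Z\mapsto V_c\mapsto W\mapsto \varrho_0[W]$. The key structural fact is that a composition of monotone, convex, positively homogeneous maps is again of that type, provided the outer map is monotone. Translation equivariance then propagates layer by layer because each coherent measure satisfies the scalar translation property. This mirrors the argument for \eqref{syst_risk2} from \cite{almen2024risk}, with one extra inner aggregation layer inserted.

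\textbf{Inner layer.} For fixed $(i,s)$, the map $Z\mapsto V_c(i,s)=\rho_{i,s}[Z^i\mid G=s]$ depends only on the component $Z^i$ restricted to $\{G=s\}$ under $P_{is}$. It inherits from coherence of $\rho_{i,s}$ the following properties:
\begin{itemize}
\item convexity;
\item positive homogeneity;
\item monotonicity, since $Z\ge Z'$ a.s.\ componentwise implies $Z^i\ge Z'^i$ $P_{is}$-a.s.;
\item $V_{c}$ of $Z+a\one$ equals $V_c(i,s)+a$, because $\rho_{i,s}[Z^i+a]=\rho_{i,s}[Z^i]+a$.
\end{itemize}

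\textbf{Middle layer.} Write $W(i)=\varrho_i[V_c(i,\cdot)]$, with $V_c(i,\cdot)$ viewed as a random variable on $\mathcal{S}$. For convexity I will use that, for $Z_\alpha=\alpha Z+(1-\alpha)Z'$, convexity of each $\rho_{i,s}$ gives the pointwise inequality
\[
V_c^{Z_\alpha}(i,\cdot)\le \alpha V_c^{Z}(i,\cdot)+(1-\alpha)V_c^{Z'}(i,\cdot)
\]
on $\mathcal{S}$. Monotonicity of $\varrho_i$ then yields $W^{Z_\alpha}(i)\le \varrho_i[\alpha V_c^Z+(1-\alpha)V_c^{Z'}]$, and convexity of $\varrho_i$ bounds this by $\alpha W^Z(i)+(1-\alpha)W^{Z'}(i)$. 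Positive homogeneity and monotonicity pass through directly. For translation, $V_c$ shifts by the constant $a$ on $\mathcal{S}$, so $W$ shifts by $a$ through the translation property of $\varrho_i$.

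\textbf{Outer layer.} Repeat the same argument with $\varrho_0$ on $\Lc_\infty(\Omega_M,\Fc_M,\pi)$. The pointwise convexity inequality for $W$ over $i\in\Omega_M$, combined with monotonicity and convexity of $\varrho_0$, gives A1. A2 and A3 are immediate. For A4 we get
\[
\varrho_\sys[Z+a\one]=\varrho_0[W+a]=\varrho_0[W]+a.
\]
Applying the same computation to $Z=0$ (where every layer returns $0$ by positive homogeneity) gives $\varrho_\sys[\one]=1$. Hence $\varrho_\sys[Z+a\one]=\varrho_\sys[Z]+a\varrho_\sys[\one]$, which is A4 in the normalized form.

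\textbf{Main obstacle.} The main obstacle is technical: checking well-definedness and the lower semicontinuity required in Definition~\ref{d:riskonvectors}. First, $V_c(i,\cdot)$ must lie in $\Lc_\infty$ on the finite space $\mathcal{S}$, which is automatic because $\mathcal{S}$ is finite and each $\rho_{i,s}$ is real-valued. Similarly, $W\in\Lc_\infty(\Omega_M)$ is automatic since $\Omega_M$ is finite.

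For lower semicontinuity, I would argue as follows. Each real-valued coherent measure on $\Lc_p$ is continuous, by the standard result that convex monotone finite functionals on $\Lc_p$ are continuous \cite{RuSh:2006a}. The functions $\varrho_i$ and $\varrho_0$ are finite convex functions on finite-dimensional spaces, hence continuous. A composition of continuous maps is continuous, hence lower semicontinuous.

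One further point requires care when $P(G=s\mid Y=i)=0$. In that case the context must be dropped, or equivalently $\varrho_i$ assigns that atom zero probability. I would note this explicitly.
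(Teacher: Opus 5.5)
Your proof is correct and uses the same layer-by-layer argument as the paper. Convexity comes from the pointwise inequality for $V_c$ pushed through $\varrho_i$ and then $\varrho_0$; monotonicity and positive homogeneity pass through directly; and translation shifts each layer by $a$. You are in fact more careful than the paper on three points it glosses over: the use of monotonicity of $\varrho_i$ and $\varrho_0$ in the convexity chain, the check that $\varrho_\sys[\one]=1$ (needed for A4 as stated in Definition~\ref{d:riskonvectors}), and lower semicontinuity.
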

\begin{proof}
(i) Given any $Z_1$, $Z_2$ and $\alpha \in (0,1)$, we consider the random vector $Z_3 = \alpha Z_1 + (1-\alpha)Z_2$.  It follows that
\begin{multline*}
V_c^3(i,s) = \rho_{i,s}[Z^i_3|Y=s] \leq \alpha\rho_{i,s}[Z^i_1|Y=s] + (1-\alpha)\rho_{i,s}[Z^i_2|Y=s] \\
= \alpha V_c^1(i,s)+ (1-\alpha) V_c^2(i,s).
\end{multline*}
by the convexity of $\rho_{i,s}[\cdot]$ for all $i\in\Jc$ and all $s\in\mathcal{S}$.  Hence
\[
W^3(i) =  \varrho_{i}[V_c^3 (i,\cdot)] \leq \alpha \varrho_i[V_c^1(i,\cdot)]+ (1-\alpha) \varrho_i[V_c^2(i,\cdot)] =
\alpha W^1(i)+ (1-\alpha) W^2(i).
\]
by the same arguments. Analogously,  
\[
\varrho_\sys[Z_3]= \varrho_0 [W^3]\leq \alpha \varrho_0 [W^1]+ (1-\alpha) \varrho_0 [W^2] =
\alpha \varrho_\sys[Z_1] +(1-\alpha) \varrho_\sys[Z_2], 
\] 
which establishes the convexity property. 

(ii) Suppose the vectors $Z_1, Z_2 $ satisfy $Z_1\leq Z_2$ a.s. This implies that $Z_1^i \leq Z_2^i$ a.s. Using the monotonicity of the risk measures $\rho_{i,s}[\cdot]$, $\varrho_i[\cdot]$ and $\varrho_0[\cdot]$, we infer the monotonicity of $\varrho_\sys[\cdot].$  

 (iii) The positive homogeneity follows in a straightforward manner from the definition. 

	    (iv) Given a random vector $Z$ and a real constant $a$, we calculate the risk $V^+_c(i,s)$ of the translated error in context $s$  as follows:
     \[
     V^+_c(i,s) = \rho_{i,s}[Z^i +a|G=s] = \rho_{i,s}[Z^i|G=s] +a = V_c(i,s) + a \quad \text{for all } s\in\mathcal{S}, \; i\in\Jc.
     \]
    The second equality holds by the translation property of the coherent measures of risk. This implies that
   $W^+(i) =  \varrho_{i}[V_c^+ (i,\cdot)] = \varrho_{i}[V_c (i,\cdot) + a] = W(i) +a $ by the same argument. Hence,  
	    $\varrho_0[W+a]= \varrho_0[W] + a,$ concluding that property (A4) holds as well.
\end{proof}  

This structure of measuring risk allows us to enforce fairness within each class by choosing the aggregate measures $\varrho_i$ to be such that deviation from the average risk or excessive risk above a certain quantile is penalized.  

As a class-level aggregator $\varrho_c$, we use the mean-semideviation risk measure. If applied to 
two classes: $A$ and $B$, with one sensitive context valued in $\mathcal{S} =\{1,2\}$, it leads to the representation:
\[
\varrho_c = \pi_{c,1}\varrho_{c|1}+\pi_{c,2}\varrho_{c|2}+ \kappa_{\rm mid}\, \pi_{c,1}\pi_{c,2}\,|\varrho_{c|1}-\varrho_{c|2}|,\quad c=A,B.
\]
Here $p_{c,s}$ is the conditional weight of group $s$ within class $c$ and $\kappa_{\rm mid}\in[0,1]$. This middle-level aggregation penalizes within-class group disparity: when the risks $\varrho_{c|1}$ and $\varrho_{c|2}$ differ, the semideviation term increases the total class risk, discouraging the classifier from achieving low risk for one group at the expense of the other.

For the outer aggregation $\varrho_0$, one could simply use the expected value $\varrho_0=\pi_A\varrho_A+\pi_B\varrho_B$, which balances the group-regularized class risks. However, a linear outer aggregation does not penalize imbalance between the class-level risks --- a classifier that achieves low risk for one class but high risk for the other is not penalized beyond the weighted average. To address this, we propose using a non-linear aggregation, e.g, the mean semideviation, at the outer level as well:
\[
\varrho_0 = \pi_A W(A) + \pi_B W(B) + \kappa_{\rm out}\, \pi_A \pi_B\, |W(A) - W(B)|,
\]
with $\kappa_{\rm out}\in[0,1]$. This gives a fully nonlinear three-level risk structure: sample losses within each scenario are aggregated by the inner risk $\rho_{i,s}$, scenario risks within each class are aggregated by the middle MSD with parameter $\kappa_{\rm mid}$, and class risks are aggregated by the outer MSD with parameter $\kappa_{\rm out}$. By the Proposition above, this three-level composition remains a coherent systemic risk measure.

\section{Three-stage regularized decomposition}
\label{sec:three_level_decomposition}

We now describe a regularized decomposition method for soling the three-level nonlinear risk structure introduced above. Recall that all functions and risk measures involved depend on the parameter $\vartheta$ of the classifier.  The overall objective is
\begin{gather*}
\varrho_\sys(\vartheta) = \varrho_0\big[W(1;\vartheta),\dots,W(M;\vartheta)\big], \\ 
W(i;\vartheta) = \varrho_i\big[V_c(i,1; \vartheta),\dots,V_c(i,S; \vartheta)\big], \quad V_c(i,s; \vartheta) = \rho_{i,s}[Z^i(\vartheta)|G=s].
\end{gather*}
For the outer risk $\varrho_0$ and for the class aggregators $\varrho_i [(i,\cdot,\vartheta)]$, the dual representations are 
\begin{align}
\varrho_0[W;\vartheta] & = \max_{\mu \in \Ac_0} \sum_{i=1}^{M} \mu_i\, W(i;\vartheta).
\label{eq:outer-dual}\\
\varrho_i[V(i,\cdot;\vartheta)] & = \max_{\zeta^i \in \Ac_i} \sum_{s \in \Sc} \zeta_s^i\, V(i,s;\vartheta),
\label{eq:class-dual}
\end{align}
The problem with non-linear aggregation takes on the form:
\begin{align}
\min_{\vartheta,\alpha} \; & \alpha + \sigma \sum_{i\in\Jc} \left\|v^i\right\|^2\quad 
 \text{s.t }\; \alpha \geq \sum_{i=1}^{M} \mu_i\, W(i;\vartheta)\quad \forall \mu \in \Ac_0;
\label{p:fairness-Istage}\\
W(i,\vartheta) = \min_{q_i} \; & q_i \quad \text{s.t }\;
  q_i \geq  \sum_{s \in \Sc} \zeta_s^i\, V(i,s;\vartheta)\quad \forall \zeta^i \in \Ac_i
  \label{p:fairness-IIstage}
\end{align}
\begin{equation}
\label{p:fairness-IIIstage}
\begin{aligned}
   V(i,s,\vartheta) = \min_{Z^i} \quad & \varrho_{i,s}[Z^i|G=s]\\
    \text{s.t.} \quad & z^i_\ell \geq \psi(x^i_\ell;\vartheta_i) - \psi(x^i_\ell;\vartheta_j) +1 \quad 
     j \in\Jc^{-i}, \;  \ell \in\{1, \ldots, m_i: G = i \}\\
    & Z^i \geq 0.
\end{aligned}
\end{equation}


In the two-stage problem treated in \cite{DentchevaTian}, the outer aggregation $\varrho_0$ is applied directly to the class risks $R_i(\vartheta)$, and the master problem maintained two sets of cuts: objective cuts approximating $\varrho_0$ and scenario cuts approximating each $R_i(\cdot)$. With the contextual risk structure, we have an additional middle level: the class risks $W(i;\vartheta)$ are themselves nonlinear aggregations via $\varrho_i$ of the contextual risks $V_c(i,s;\vartheta)$. 

The construction of cuts at each level follows from the dual representation. 
At each iteration $k$, we evaluate the class risks $W^k(i;\vartheta)$ and identify the maximizer $\mu^k \in \Ac_0$ for the random variable $W^k(\cdot;\vartheta)$ that has
realizations $W^k(i;\vartheta)$ with probabilities $\frac{m_i}{N}$, $N = \sum_{i=1}^M m_i$ . This provides the outer cut
\begin{equation}
\alpha \geq \sum_{i=1}^{M} \mu_i^k\, q_i,
\label{eq:outer-cut}
\end{equation}
where $q_i$ are variables approximating the class risks. 
This is the same type of cut as the objective cut in \cite{DentchevaTian}, but now acting on the class-level variables $q_i$ rather than the scenario-level variables. Similarly, for each class-level aggregator $\varrho_i$, at each iteration $k$, we evaluate the scenario risks $R_{is}^k = V(i,s:\vartheta)$ for $s \in \Sc$ and identify the maximizer $\zeta^{i,k} \in \Ac_i$. This provides the class-level cut
\begin{equation}
q_i \geq \sum_{s \in \Sc} \zeta_s^{i,k}\, r_{is},
\label{eq:class-cut}
\end{equation}
where $r_{is}$ are variables approximating the contextual risks. These cuts are the new ingredient compared to the method in \cite{DentchevaTian}: they approximate each $\varrho_i$ from below as a function of the scenario risks.

At the bottom level, the scenario cuts are constructed exactly as in \cite{DentchevaTian}. For each scenario $(i,s)$, we solve the subproblem \eqref{p:fairness-IIIstage} to obtain the scenario risk $R_{is}^k$ and the subgradient $g_{is}^k$ of $V(i,s;\vartheta)$ with respect to $\vartheta$, which provides the cut
\begin{equation}
r_{is} \geq R_{is}^k + \langle g_{is}^k, \vartheta - \vartheta^k\rangle.
\label{eq:scenario-cut}
\end{equation}

The regularized master problem at iteration $k$ takes the form:
\begin{equation}
\label{eq:master_1level}
\begin{aligned}
\min_{\vartheta,\alpha} \quad & \alpha + \sigma \|v\|^2 + \beta\|\vartheta - w^k\|^2 \\
\text{s.t.} \quad & \alpha \geq \textstyle\sum_{i=1}^{M} \mu_i^{\kc}\,\bar{q}_i, && \kc \in \Kc_0, \\
& \alpha\geq 0.
\end{aligned}
\end{equation}
For each $i\in\Jc$, we consider the problem
\begin{equation}
\label{eq:master_2level}
\begin{aligned}
\bar{q}_i = \min_{r,\, q,\, \alpha} \quad & q_i \\
\text{s.t.} \quad &  q_i \geq \textstyle\sum_{s \in \Sc} \zeta_{s}^{i,\kc}\, r_{is}, && \kc \in \Kc_i,\; \\
& r_{is} \geq R_{is}^{\kc} + \langle g_{is}^{\kc}, \vartheta - \vartheta^{\kc}\rangle, && \kc \in \Kc_{is},\; s\in\Sc, \\
& r_{is}, q_{i} \geq 0\quad s\in\Sc.
\end{aligned}
\end{equation}
Here $\sigma > 0$ is the regularization parameter, $\beta > 0$ is the proximal parameter, and $w^k$ is the current stability center. The sets $\Kc_0$, $\Kc_i$, and $\Kc_{is}$ collect the indices of the outer, class-level, and scenario cuts, respectively.

We propose the following three-level regularized multi-cut method with a parameter $\delta \in (0,1)$.

   \begin{itemize}
    \item[]   \vspace{0.2cm} \hrule \vspace{0.2cm}
                \textbf{Three-stage Classification with Contextual Systemic Risk} 
                \vspace{0.2cm} \hrule \vspace{0.2cm}  
\item[\emph{Step 0.}] Set $k=1$. Choose initial decision variable $\vartheta^1$.
\item[\emph{Step 1.}] For each  $(s,i)$, $s\in\Sc;\, i \in \Jc$, solve the third stage subproblem \eqref{p:fairness-IIIstage}. Denote its optimal value by $R_{is}^k$ and calculate the subgradient $g_{is}^k$. Add the new scenario cuts to $\Kc_{is}$.
\item[\emph{Step 2.}] For each class $i \in\Jc$, evaluate the class-level risk $W^k(i) = \varrho_i[R_{is}^k : s \in \Sc]$ by solving \eqref{eq:master_2level} and calculate $\zeta^{i,k}$ using \eqref{eq:class-dual}. Add the new class-level cuts to $\Kc_i$, $i\in\Jc$.
\item[\emph{Step 3.}] Calculate the systemic risk $\varrho_\sys^k = \varrho_0[W^k(\cdot)]$  and calculate $\mu^k$ using \eqref{eq:outer-dual}. Add the new outer cut to $\Kc_0$.
\item[\emph{Step 4.}] Determine the new center $w^k$ as follows. If $k=1$ or 
\[
\varrho_\sys^k \leq (1-\delta)\bar{\varrho}^{k-1} + \delta \alpha^{k-1},
\]
then set $w^k = \vartheta^k$ and $\bar{\varrho}^k = \varrho_\sys^k$ (descent step). Otherwise, set $w^k = w^{k-1}$ and $\bar{\varrho}^k = \bar{\varrho}^{k-1}$ (null step).
\item[\emph{Step 5.}] Solve the master problem \eqref{eq:master_1level}. Denote the solution by $\alpha^k, \vartheta^k$.
\item[\emph{Step 6.}] If $\bar{\varrho}^k = \alpha^k$, then stop ($w^k$ is an optimal solution); otherwise continue.
\item[\emph{Step 7.}] Remove from the sets $\Kc_0$, $\Kc_i$, and $\Kc_{is}$ the constraints whose Lagrange multipliers are 0 at the solution of \eqref{eq:master_2level} and 
\eqref{eq:master_1level}. Increase $k$ by 1 and go to Step~1.
  \item[]   \vspace{0.2cm} \hrule \vspace{0.2cm}
\end{itemize}

\medskip

We observe that the second stage problem only calculates the best approximation of the risk of individual classes and does not change the classification parameter. 
Using the monotonicity property of the risk measures involved and the preserving the hierarchical structure, we may collapse the first and the second stage into one.
In that case, we accumulate all three sets of cuts in one master problem. The regularized master problem at iteration $k$ takes the form:
\begin{equation}
\label{eq:master_3level}
\begin{aligned}
\min_{\vartheta,\, r,\, q,\, \alpha} \quad & \alpha + \sigma \|v\|^2 + \beta\|\vartheta - w^k\|^2 \\
\text{s.t.} \quad & \alpha \geq \textstyle\sum_{i=1}^{N} \mu_i^{\kc}\, q_i, && \kc \in \Kc_0, \\
& q_i \geq \textstyle\sum_{s \in \Sc} \zeta_s^{i,\kc}\, r_{is}, && \kc \in \Kc_i,\; i\in\Jc, \\
& r_{is} \geq R_{is}^{\kc} + (g_{is}^{\kc})^\top (\vartheta - \vartheta^{\kc}), && \kc \in \Kc_{is},\; s\in\Sc, i\in\Jc, \\
& \alpha, r_{is}, q_i \geq 0 \quad s\in\Sc, i\in\Jc.
\end{aligned}
\end{equation}

The three-stage method now is modified to the following three-level regularized multi-cut method.

   \begin{itemize}
    \item[]   \vspace{0.2cm} \hrule \vspace{0.2cm}
                \textbf{Three-level Classification with Contextual Systemic Risk} 
                \vspace{0.2cm} \hrule \vspace{0.2cm}  
\item[\emph{Step 0.}] Set $k=1$. Choose initial decision variable $\vartheta^1$.
\item[\emph{Step 1.}] For each  $(s,i)$, $s\in\Sc;\, i \in \Jc$, solve the third stage subproblem \eqref{p:fairness-IIIstage}. Denote its optimal value by $R_{is}^k$ and calculate the subgradient $g_{is}^k$. Add the new scenario cuts to $\Kc_{is}$.
\item[\emph{Step 2.}] For each class $i \in\Jc$, evaluate the class-level risk $W^k(i) = \varrho_i[R_{is}^k : s \in \Sc]$ and calculate $\zeta^{c,k}$ using \eqref{eq:class-dual}. Add the new class-level cuts to $\Kc_i$, $i\in\Jc$.
\item[\emph{Step 3.}] Calculate the systemic risk $\varrho_{\rm sys}^k = \varrho_0[W^k(\cdot)]$ and calculate $\mu^k$ using \eqref{eq:outer-dual}. Add the new outer cut to $\Kc_0$.
\item[\emph{Step 4.}] Determine the new center $w^k$ as follows. If $k=1$ or 
\[
\varrho_\sys^k \leq (1-\delta)\bar{\varrho}^{k-1} + \delta \alpha^{k-1},
\]
then set $w^k = \vartheta^k$ and $\bar{\varrho}^k = \varrho_\sys^k$ (descent step). Otherwise, set $w^k = w^{k-1}$ and $\bar{\varrho}^k = \bar{\varrho}^{k-1}$ (null step).
\item[\emph{Step 5.}] Solve the master problem \eqref{eq:master_3level}. Denote the solution by $\alpha^k, \vartheta^k, r^k, q^k$.
\item[\emph{Step 6.}] If $\bar{\varrho}^k = \alpha^k$, then stop ($w^k$ is an optimal solution); otherwise continue.
\item[\emph{Step 7.}] Remove from the sets $\Kc_0$, $\Kc_i$, and $\Kc_{is}$ the constraints whose Lagrange multipliers are 0 at the solution of \eqref{eq:master_3level}. Increase $k$ by 1 and go to Step~1.
  \item[]   \vspace{0.2cm} \hrule \vspace{0.2cm}
\end{itemize}

Observe that the convergence of both methods follows from the convergence theorem of the regularized decomposition method in \cite{dentcheva2025risk}.

\section{Numerical Experiments with Fair Risk-averse Classification}
\label{sub:fair_risk_averse_classification}

We now test our method on two datasets, the Adult dataset~\cite{adult_2} and the ACSPublicCoverage dataset from the Folktables benchmark suite~\cite{ding2021retiring}, for two different purposes. On first one, we study what happens when the recorded sensitive attribute is wrong by adding noises to the label. On the second one, since the ACSPublicCoverage data is recorded in different states, we can study what happens when the test set distribution is shifted from the training set. On both datasets we use a binary sensitive attribute (sex) and a multi-group one (race), and then, on the public coverage data, we use an intersectional attribute (sex $\times$ race), so we can thoroughly observe the behavior of the methods on different numbers of protected groups.

We compare CNACR against a few baseline methods: a plain linear SVM, a covariance-constrained classifier from \cite{zafar2019flexible}, a fair empirical risk minimization from \cite{donini2018empirical}, the reductions approach from \cite{agarwal2018reductionsapproachfairclassification} with EO constraints. All of them use a linear SVM as the base classifier, so if we see any difference, we know that it comes from the fairness-enforcing method but not from the classification model. Note that the method of \cite{donini2018empirical} is only defined for two protected groups, so we use it only in the experiments with sex as the sensitive attribute. We sweep through the parameter configurations for each baseline method and report the best one of each family in the tables. For CNACR we use fixed parameters $\kappa_{\mathrm{inner}} = 0.1$, $\kappa_{\mathrm{mid}} = 1.0$, $\kappa_{\mathrm{out}} = 0.5$.

We report the EO ratio as one of the fairness metrics. We argued previously that this mainstream metric is limited to reflecting the fairness between the two extreme groups when more than two groups are involved, so we also report the Gini coefficient and the goodness-of-fit test results.

\subsection{Adult: Corrupted Sensitive Attributes}

The Adult dataset has $48842$ samples, and the task is to predict whether an individual earns more than \$50K per year. We first use sex as the sensitive attribute, which is binary. Then we use race, grouped into White, Black, and Other with proportions $85\% / 10\% / 5\%$, which allows us to run the experiment in an imbalanced multi-group setting.

\paragraph{Setup.}
For each of $100$ independent runs, we randomly sample $20000$ data points from the full dataset proportionally across classes and genders, and split them into $70\%$ training and $30\%$ test. By 'proportionally', we mean that the sample keeps the same proportion of each class and each group as they are in the whole dataset. To simulate corrupted sensitive attributes, we randomly flip the sensitive attribute of $20\%$ of the training samples and keep the test set clean. For example, in the experiment with sex as the sensitive group, we randomly pick $20\%$ of the training samples, among which males become females and vice versa. All methods are trained on the same contaminated training set and evaluated on the same clean test set in each run, so each comparison is paired. We repeat this for 100 times at $0\%$ and $20\%$ mislabeling respectively, and report both results.

\paragraph{Results.}
Tables~\ref{tab:adult_sex} and~\ref{tab:adult_race} give the results of the experiment using sex as sensitive attribute. We report the average on all metrics $\pm$ standard deviation over the $100$ runs, and the best value in each row is bolded. Since each method uses the same train and test set in every run, the results are paired, and we run paired t-tests to see if the differences of the metrics are statistically significant.

\begin{table}[!htbp]
\centering
\footnotesize
\setlength{\tabcolsep}{2pt}
\begin{tabular}{llccccc}
\toprule
& & \textbf{SVM} & \textbf{CNACR} & \textbf{Zafar} & \textbf{Donini} & \textbf{FL-EOp}\\
\midrule
\multirow{4}{*}{\rotatebox{90}{Noise $0\%$}}
 & EO-ratio  & $0.8328 \pm 0.059$ & $\mathbf{0.9533 \pm 0.032}$ & $0.9480 \pm 0.038$ & $0.9270 \pm 0.055$ & $0.9391 \pm 0.043$ \\
 & Gini(TPR) & $0.0219 \pm 0.008$ & $\mathbf{0.0062 \pm 0.004}$ & $0.0069 \pm 0.005$ & $0.0095 \pm 0.007$ & $0.0080 \pm 0.006$ \\
 & F1        & $0.7813 \pm 0.006$ & $\mathbf{0.7866 \pm 0.006}$ & $0.7751 \pm 0.006$ & $0.7801 \pm 0.006$ & $0.7786 \pm 0.006$ \\
 & $\chi^2$ rej. & $76.0\%$ & $8.0\%$ & $\mathbf{6.0\%}$ & $20.0\%$ & $12.0\%$ \\
\midrule
\multirow{4}{*}{\rotatebox{90}{Noise $20\%$}}
 & EO-ratio  & $0.8328 \pm 0.059$ & $0.9463 \pm 0.040$ & $\mathbf{0.9466 \pm 0.039}$ & $0.9047 \pm 0.072$ & $0.9025 \pm 0.064$ \\
 & Gini(TPR) & $0.0219 \pm 0.008$ & $\mathbf{0.0071 \pm 0.005}$ & $\mathbf{0.0071 \pm 0.005}$ & $0.0124 \pm 0.009$ & $0.0127 \pm 0.008$ \\
 & F1        & $0.7813 \pm 0.006$ & $\mathbf{0.7871 \pm 0.006}$ & $0.7752 \pm 0.006$ & $0.7800 \pm 0.006$ & $0.7793 \pm 0.007$ \\
 & $\chi^2$ rej. & $76.0\%$ & $13.0\%$ & $\mathbf{8.0\%}$ & $33.0\%$ & $33.0\%$ \\
\bottomrule
\end{tabular}
\caption{Adult with sex as the sensitive attribute (mean $\pm$ standard deviation over $100$ runs). $\chi^2$ rej.\ is the fraction of runs in which equal TPR across groups is rejected at $\alpha = 0.05$. On this attribute, CNACR and the covariance-constrained classifier (Zafar) are statistically indistinguishable on both fairness measures; the bolding marks the better mean but with no significant difference.}
\label{tab:adult_sex}
\end{table}

\begin{table}[!htbp]
\centering
\footnotesize
\setlength{\tabcolsep}{3pt}
\begin{tabular}{llcccc}
\toprule
& & \textbf{SVM} & \textbf{CNACR} & \textbf{Zafar} & \textbf{FL-EOp} \\
\midrule
\multirow{4}{*}{\rotatebox{90}{Noise $0\%$}}
 & EO-ratio  & $0.7433 \pm 0.097$ & $\mathbf{0.8756 \pm 0.072}$ & $0.8581 \pm 0.075$ & $0.8322 \pm 0.096$ \\
 & Gini(TPR) & $0.0139 \pm 0.005$ & $\mathbf{0.0066 \pm 0.004}$ & $0.0082 \pm 0.005$ & $0.0092 \pm 0.005$ \\
 & F1        & $0.7813 \pm 0.005$ & $\mathbf{0.7892 \pm 0.006}$ & $0.7798 \pm 0.006$ & $0.7802 \pm 0.005$ \\
 & $\chi^2$ rej. & $54.0\%$ & $\mathbf{10.0\%}$ & $12.0\%$ & $16.0\%$ \\
\midrule
\multirow{4}{*}{\rotatebox{90}{Noise $20\%$}}
 & EO-ratio  & $0.7433 \pm 0.097$ & $\mathbf{0.8736 \pm 0.073}$ & $0.8548 \pm 0.079$ & $0.7938 \pm 0.111$ \\
 & Gini(TPR) & $0.0139 \pm 0.005$ & $\mathbf{0.0067 \pm 0.004}$ & $0.0084 \pm 0.005$ & $0.0113 \pm 0.006$ \\
 & F1        & $0.7813 \pm 0.005$ & $\mathbf{0.7893 \pm 0.005}$ & $0.7797 \pm 0.006$ & $0.7804 \pm 0.006$ \\
 & $\chi^2$ rej. & $54.0\%$ & $18.0\%$ & $\mathbf{14.0\%}$ & $35.0\%$ \\
\bottomrule
\end{tabular}
\caption{Adult with race as the sensitive attribute (mean $\pm$ standard deviation over $100$ runs). CNACR shows better EO-ratio, Gini and F1 than all baselines with statistical significance ($p < 0.01$).}
\label{tab:adult_race}
\end{table}
With race as the sensitive attribute (Table~\ref{tab:adult_race}), CNACR shows significant advantage. On clean data, compared to the plain SVM, it raises the EO-ratio from $0.7433$ to $0.8756$, more than halves the Gini coefficient from $0.0139$ to $0.0066$, and greatly reduces the chi-square rejection rate from a random $54\%$ to $10\%$. In a paired $t$-test, the advantage is significant with $p < 10^{-4}$, with CNACR higher in $92$ out of the $100$ runs.

The closest baseline is the covariance-constrained classifier (Zafar), and the two behave differently in the two experiments. In the race experiment, the paired differences favor CNACR on the EO-ratio ($+0.0175$, $p = 0.004$) and on the Gini coefficient ($-0.0017$, $p < 10^{-3}$). However, the two methods are statistically indistinguishable on both fairness measures in the sex experiment. But against every other baseline, and against the plain SVM, CNACR's fairness differences are still significant at the $0.05$ level on both attributes and at both corruption levels. This indicates that CNACR might show a greater advantage compared to the baseline methods as the number of attributes increases. The same pattern will repeat in future experiments.

This improvement in fairness metrics does not come at a cost in classification quality. CNACR attains a higher F1 score than every baseline on both attributes and at both corruption levels, and each of these differences is significant at $p < 10^{-4}$. 

\subsection*{ACSPublicCoverage: Distribution Shift across States}

In this section, we conduct the experiment on the ACSPublicCoverage dataset. We train all the method only on California data and test on all 8 states, for a distributional shift test. ACSPublicCoverage predicts whether an individual has public health insurance coverage. We choose this dataset because it is very different than the Adult dataset. The coverage rates in the raw data are very close across all groups. Same as the Adult dataset experiments, we first conduct the binary-group experiment using sex as the attribute. Then we conduct the multi-group experiment using the 5 race groups in the data: White, Black, Hispanic, Asian, and Other. In addition, we also use the intersection of the two (sex $\times$ race) for a 10-sensitive-attribute experiment.

\paragraph{Setup.}
For each run, we randomly sample $N_{\mathrm{train}} = 20000$ data proportionally from California. Again, the sampling preserves the proportion of each class and attribute. We only train the methods with California data, and then we randomly sample $N_{\mathrm{test}} = 5000$ out-of-sample data from each state, including California, as test sets. We repeat this for $100$ runs and report the mean $\pm$ standard deviation like before. And again, every method is trained on the same training set and tested on the same test set, allowing us to compare the results by paired $t$-tests over the $100$ runs.

\paragraph{Results.}
Tables~\ref{tab:pubcov_sex_eor}--\ref{tab:pubcov_sex_chi2} give the binary case with sex as the protected attribute. CNACR attains the best EO-ratio and the lowest Gini coefficient on every state. According to the paired $t$-tests, every advantage is significant at $p < 10^{-4}$, with CNACR ahead on over $90$ of the $100$ runs. On the contrary, the three fairness baselines stay very close to the plain SVM on both measures in every state. Table~\ref{tab:pubcov_sex_chi2} reports the goodness-of-fit test rejection counts at $\alpha = 0.05$. CNACR has the lowest or joint-lowest count on 6 of the 8 states. Pooled over all $800$ tests, CNACR rejects the null hypothesis of equal TPR in $4.9\%$ of cases, lowest compared to all the other methods. In terms of classification performance, CNACR has the lowest pooled F1 score. However, we can see that the poor performance is mainly in the last three states (FL, GA, TX), and CNACR has the best F1 score out of the five methods in the other five states, with a smaller standard deviation. The three losing states have very different distributions from the rest of the five states, whose coverage rates are much higher. This pattern shows again in the race experiment.

\begin{table}[!htbp]
\centering
\footnotesize
\setlength{\tabcolsep}{2pt}
\begin{tabular}{lccccc}
\toprule
\textbf{State} & \textbf{SVM} & \textbf{CNACR} & \textbf{Zafar} & \textbf{Donini} & \textbf{FL-EOp} \\
\midrule
CA & $0.9433 \pm 0.042$ & $\mathbf{0.9618 \pm 0.027}$ & $0.9434 \pm 0.039$ & $0.9453 \pm 0.039$ & $0.9444 \pm 0.041$ \\
NY & $0.9552 \pm 0.030$ & $\mathbf{0.9657 \pm 0.022}$ & $0.9545 \pm 0.031$ & $0.9565 \pm 0.030$ & $0.9551 \pm 0.031$ \\
MA & $0.9604 \pm 0.033$ & $\mathbf{0.9726 \pm 0.019}$ & $0.9587 \pm 0.034$ & $0.9589 \pm 0.034$ & $0.9607 \pm 0.034$ \\
AZ & $0.9594 \pm 0.030$ & $\mathbf{0.9705 \pm 0.020}$ & $0.9588 \pm 0.029$ & $0.9606 \pm 0.028$ & $0.9607 \pm 0.030$ \\
IL & $0.9521 \pm 0.035$ & $\mathbf{0.9669 \pm 0.028}$ & $0.9527 \pm 0.034$ & $0.9532 \pm 0.034$ & $0.9512 \pm 0.037$ \\
FL & $0.9327 \pm 0.039$ & $\mathbf{0.9628 \pm 0.027}$ & $0.9335 \pm 0.041$ & $0.9347 \pm 0.041$ & $0.9335 \pm 0.041$ \\
GA & $0.9569 \pm 0.029$ & $\mathbf{0.9723 \pm 0.020}$ & $0.9565 \pm 0.029$ & $0.9562 \pm 0.032$ & $0.9582 \pm 0.031$ \\
TX & $0.9440 \pm 0.038$ & $\mathbf{0.9649 \pm 0.024}$ & $0.9428 \pm 0.038$ & $0.9437 \pm 0.037$ & $0.9433 \pm 0.037$ \\
\midrule
\textbf{Pooled} & $0.9505$ & $\mathbf{0.9672}$ & $0.9501$ & $0.9511$ & $0.9509$ \\
\bottomrule
\end{tabular}
\caption{ACSPublicCoverage, sex: per-state EO-ratio (mean $\pm$ std over $100$ runs).}
\label{tab:pubcov_sex_eor}
\end{table}

\begin{table}[!htbp]
\centering
\footnotesize
\setlength{\tabcolsep}{2pt}
\begin{tabular}{lccccc}
\toprule
\textbf{State} & \textbf{SVM} & \textbf{CNACR} & \textbf{Zafar} & \textbf{Donini} & \textbf{FL-EOp} \\
\midrule
CA & $0.0148 \pm 0.011$ & $\mathbf{0.0098 \pm 0.007}$ & $0.0148 \pm 0.010$ & $0.0143 \pm 0.011$ & $0.0145 \pm 0.011$ \\
NY & $0.0116 \pm 0.008$ & $\mathbf{0.0088 \pm 0.006}$ & $0.0118 \pm 0.008$ & $0.0112 \pm 0.008$ & $0.0116 \pm 0.008$ \\
MA & $0.0102 \pm 0.009$ & $\mathbf{0.0070 \pm 0.005}$ & $0.0107 \pm 0.009$ & $0.0106 \pm 0.009$ & $0.0102 \pm 0.009$ \\
AZ & $0.0105 \pm 0.008$ & $\mathbf{0.0075 \pm 0.005}$ & $0.0106 \pm 0.008$ & $0.0101 \pm 0.007$ & $0.0101 \pm 0.008$ \\
IL & $0.0124 \pm 0.009$ & $\mathbf{0.0085 \pm 0.007}$ & $0.0123 \pm 0.009$ & $0.0121 \pm 0.009$ & $0.0127 \pm 0.010$ \\
FL & $0.0176 \pm 0.011$ & $\mathbf{0.0096 \pm 0.007}$ & $0.0174 \pm 0.011$ & $0.0171 \pm 0.011$ & $0.0174 \pm 0.011$ \\
GA & $0.0111 \pm 0.008$ & $\mathbf{0.0071 \pm 0.005}$ & $0.0112 \pm 0.008$ & $0.0113 \pm 0.008$ & $0.0108 \pm 0.008$ \\
TX & $0.0146 \pm 0.010$ & $\mathbf{0.0090 \pm 0.006}$ & $0.0149 \pm 0.010$ & $0.0147 \pm 0.010$ & $0.0148 \pm 0.010$ \\
\midrule
\textbf{Pooled} & $0.0128$ & $\mathbf{0.0084}$ & $0.0130$ & $0.0127$ & $0.0128$ \\
\bottomrule
\end{tabular}
\caption{ACSPublicCoverage, sex: per-state Gini coefficient on per-group TPRs (mean $\pm$ std over $100$ runs).}
\label{tab:pubcov_sex_gini}
\end{table}

\begin{table}[!htbp]
\centering
\footnotesize
\setlength{\tabcolsep}{2pt}
\begin{tabular}{lccccc}
\toprule
\textbf{State} & \textbf{SVM} & \textbf{CNACR} & \textbf{Zafar} & \textbf{Donini} & \textbf{FL-EOp} \\
\midrule
CA & $0.6270 \pm 0.008$ & $\mathbf{0.6514 \pm 0.007}$ & $0.6262 \pm 0.008$ & $0.6266 \pm 0.008$ & $0.6268 \pm 0.008$ \\
NY & $0.6448 \pm 0.008$ & $\mathbf{0.6691 \pm 0.007}$ & $0.6441 \pm 0.008$ & $0.6434 \pm 0.008$ & $0.6442 \pm 0.008$ \\
MA & $0.6885 \pm 0.008$ & $\mathbf{0.7100 \pm 0.007}$ & $0.6878 \pm 0.008$ & $0.6867 \pm 0.008$ & $0.6876 \pm 0.008$ \\
AZ & $0.6604 \pm 0.007$ & $\mathbf{0.6620 \pm 0.006}$ & $0.6599 \pm 0.007$ & $0.6598 \pm 0.007$ & $0.6601 \pm 0.007$ \\
IL & $0.6926 \pm 0.007$ & $\mathbf{0.6989 \pm 0.007}$ & $0.6912 \pm 0.007$ & $0.6911 \pm 0.008$ & $0.6921 \pm 0.007$ \\
FL & $0.6779 \pm 0.007$ & $0.6542 \pm 0.008$ & $0.6782 \pm 0.007$ & $\mathbf{0.6783 \pm 0.007}$ & $0.6779 \pm 0.007$ \\
GA & $0.6909 \pm 0.007$ & $0.6592 \pm 0.008$ & $0.6912 \pm 0.008$ & $\mathbf{0.6917 \pm 0.007}$ & $0.6912 \pm 0.007$ \\
TX & $0.6744 \pm 0.008$ & $0.6379 \pm 0.008$ & $0.6743 \pm 0.007$ & $\mathbf{0.6751 \pm 0.007}$ & $0.6746 \pm 0.007$ \\
\midrule
\textbf{Pooled} & $\mathbf{0.6696}$ & $0.6678$ & $0.6691$ & $0.6691$ & $0.6693$ \\
\bottomrule
\end{tabular}
\caption{ACSPublicCoverage, sex: per-state F1 score (mean $\pm$ std over $100$ runs).}
\label{tab:pubcov_sex_f1}
\end{table}

\begin{table}[!htbp]
\centering
\footnotesize
\setlength{\tabcolsep}{4pt}
\begin{tabular}{lccccc}
\toprule
\textbf{State} & \textbf{SVM} & \textbf{CNACR} & \textbf{Zafar} & \textbf{Donini} & \textbf{FL-EOp} \\
\midrule
CA & $10/100$ & $\mathbf{6/100}$ & $8/100$ & $7/100$ & $9/100$ \\
NY & $5/100$ & $\mathbf{4/100}$ & $6/100$ & $\mathbf{4/100}$ & $7/100$ \\
MA & $8/100$ & $\mathbf{2/100}$ & $7/100$ & $7/100$ & $9/100$ \\
AZ & $\mathbf{2/100}$ & $\mathbf{2/100}$ & $\mathbf{2/100}$ & $\mathbf{2/100}$ & $\mathbf{2/100}$ \\
IL & $8/100$ & $10/100$ & $6/100$ & $\mathbf{5/100}$ & $9/100$ \\
FL & $10/100$ & $\mathbf{7/100}$ & $11/100$ & $8/100$ & $11/100$ \\
GA & $3/100$ & $3/100$ & $\mathbf{1/100}$ & $4/100$ & $5/100$ \\
TX & $6/100$ & $\mathbf{5/100}$ & $7/100$ & $6/100$ & $\mathbf{5/100}$ \\
\midrule
\textbf{Total} & $6.5\%$ & $\mathbf{4.9\%}$ & $6.0\%$ & $5.4\%$ & $7.1\%$ \\
\bottomrule
\end{tabular}
\caption{ACSPublicCoverage, sex: per-state chi-square rejection counts out of $100$ runs, at $\alpha=0.05$. The last row pools all $800$ (seed, state) settings and is descriptive only, since the eight states within a run share one trained classifier.}
\label{tab:pubcov_sex_chi2}
\end{table}

We now report the five-group case with race as the sensitive attribute. CNACR attains the best EO-ratio on every test state (Table~\ref{tab:pubcov_eor}) with the lowest standard deviations, so CNACR's fairness is not only better on average but also more consistent from run to run. And the paired $t$-test says the difference over the $100$ runs is significant at $p < 10^{-4}$ against every baseline, with CNACR winning on $96$ to $98$ of the $100$ runs. 

\begin{table}[!htbp]
\centering
\footnotesize
\setlength{\tabcolsep}{3pt}
\begin{tabular}{lcccc}
\toprule
\textbf{State} & \textbf{SVM} & \textbf{CNACR} & \textbf{Zafar} & \textbf{FL-EOp} \\
\midrule
CA & $0.7368 \pm 0.089$ & $\mathbf{0.8311 \pm 0.057}$ & $0.7572 \pm 0.087$ & $0.7759 \pm 0.089$ \\
TX & $0.7475 \pm 0.100$ & $\mathbf{0.8216 \pm 0.082}$ & $0.7472 \pm 0.099$ & $0.7201 \pm 0.103$ \\
NY & $0.7891 \pm 0.090$ & $\mathbf{0.8352 \pm 0.078}$ & $0.7923 \pm 0.091$ & $0.7663 \pm 0.103$ \\
FL & $0.7012 \pm 0.132$ & $\mathbf{0.7900 \pm 0.098}$ & $0.7081 \pm 0.129$ & $0.7005 \pm 0.128$ \\
IL & $0.7517 \pm 0.089$ & $\mathbf{0.8098 \pm 0.073}$ & $0.7584 \pm 0.092$ & $0.7461 \pm 0.097$ \\
MA & $0.7583 \pm 0.092$ & $\mathbf{0.8112 \pm 0.068}$ & $0.7432 \pm 0.093$ & $0.6881 \pm 0.111$ \\
GA & $0.7654 \pm 0.099$ & $\mathbf{0.8445 \pm 0.071}$ & $0.7712 \pm 0.100$ & $0.7462 \pm 0.096$ \\
AZ & $0.7291 \pm 0.095$ & $\mathbf{0.8083 \pm 0.081}$ & $0.7480 \pm 0.090$ & $0.7514 \pm 0.091$ \\
\midrule
\textbf{Pooled} & $0.7474$ & $\mathbf{0.8190}$ & $0.7532$ & $0.7368$ \\
\bottomrule
\end{tabular}
\caption{ACSPublicCoverage, race: per-state EO-ratio (mean $\pm$ std over $100$ runs).}
\label{tab:pubcov_eor}
\end{table}

\begin{table}[!htbp]
\centering
\footnotesize
\setlength{\tabcolsep}{3pt}
\begin{tabular}{lcccc}
\toprule
\textbf{State} & \textbf{SVM} & \textbf{CNACR} & \textbf{Zafar} & \textbf{FL-EOp} \\
\midrule
CA & $0.6270 \pm 0.007$          & $\mathbf{0.6514 \pm 0.006}$ & $0.6268 \pm 0.007$ & $0.6265 \pm 0.007$ \\
TX & $\mathbf{0.6745 \pm 0.007}$ & $0.6388 \pm 0.008$ & $0.6745 \pm 0.008$ & $0.6742 \pm 0.008$ \\
NY & $0.6450 \pm 0.007$          & $\mathbf{0.6692 \pm 0.007}$ & $0.6445 \pm 0.007$ & $0.6434 \pm 0.007$ \\
FL & $\mathbf{0.6783 \pm 0.007}$ & $0.6544 \pm 0.007$ & $0.6786 \pm 0.007$ & $0.6782 \pm 0.007$ \\
IL & $0.6926 \pm 0.007$          & $\mathbf{0.7002 \pm 0.007}$ & $0.6920 \pm 0.007$ & $0.6904 \pm 0.007$ \\
MA & $0.6879 \pm 0.008$          & $\mathbf{0.7109 \pm 0.008}$ & $0.6875 \pm 0.008$ & $0.6875 \pm 0.008$ \\
GA & $\mathbf{0.6906 \pm 0.008}$ & $0.6600 \pm 0.008$ & $0.6909 \pm 0.008$ & $0.6900 \pm 0.008$ \\
AZ & $0.6608 \pm 0.007$          & $\mathbf{0.6628 \pm 0.007}$ & $0.6607 \pm 0.007$ & $0.6603 \pm 0.007$ \\
\midrule
\textbf{Pooled} & $\mathbf{0.6696}$ & $0.6685$ & $0.6694$ & $0.6688$ \\
\bottomrule
\end{tabular}
\caption{ACSPublicCoverage, race: per-state F1 score (mean $\pm$ std over $100$ runs).}
\label{tab:pubcov_f1}
\end{table}

\begin{table}[!htbp]
\centering
\footnotesize
\setlength{\tabcolsep}{3pt}
\begin{tabular}{lcccc}
\toprule
\textbf{State} & \textbf{SVM} & \textbf{CNACR} & \textbf{Zafar} & \textbf{FL-EOp} \\
\midrule
CA & $0.0351 \pm 0.012$ & $\mathbf{0.0216 \pm 0.008}$ & $0.0325 \pm 0.012$ & $0.0287 \pm 0.011$ \\
TX & $0.0259 \pm 0.009$ & $\mathbf{0.0182 \pm 0.007}$ & $0.0260 \pm 0.010$ & $0.0295 \pm 0.011$ \\
NY & $0.0227 \pm 0.009$ & $\mathbf{0.0169 \pm 0.007}$ & $0.0220 \pm 0.009$ & $0.0255 \pm 0.010$ \\
FL & $0.0253 \pm 0.011$ & $\mathbf{0.0176 \pm 0.008}$ & $0.0249 \pm 0.010$ & $0.0271 \pm 0.011$ \\
IL & $0.0274 \pm 0.010$ & $\mathbf{0.0186 \pm 0.007}$ & $0.0252 \pm 0.010$ & $0.0241 \pm 0.008$ \\
MA & $0.0204 \pm 0.008$ & $\mathbf{0.0151 \pm 0.005}$ & $0.0215 \pm 0.008$ & $0.0262 \pm 0.009$ \\
GA & $0.0227 \pm 0.009$ & $\mathbf{0.0144 \pm 0.007}$ & $0.0225 \pm 0.009$ & $0.0331 \pm 0.014$ \\
AZ & $0.0267 \pm 0.011$ & $\mathbf{0.0171 \pm 0.007}$ & $0.0255 \pm 0.011$ & $0.0263 \pm 0.011$ \\
\midrule
\textbf{Pooled} & $0.0258$ & $\mathbf{0.0174}$ & $0.0250$ & $0.0276$ \\
\bottomrule
\end{tabular}
\caption{ACSPublicCoverage, race: per-state Gini coefficient on per-group TPRs (mean $\pm$ std over $100$ runs).}
\label{tab:pubcov_gini}
\end{table}

\begin{table}[!htbp]
\centering
\footnotesize
\setlength{\tabcolsep}{4pt}
\begin{tabular}{lcccc}
\toprule
\textbf{State} & \textbf{SVM} & \textbf{CNACR} & \textbf{Zafar} & \textbf{FL-EOp} \\
\midrule
CA & $21/100$ & $8/100$          & $16/100$ & $\mathbf{6/100}$  \\
TX & $5/100$  & $\mathbf{2/100}$ & $5/100$  & $13/100$ \\
NY & $\mathbf{3/100}$ & $4/100$  & $6/100$  & $13/100$ \\
FL & $10/100$ & $\mathbf{6/100}$ & $9/100$  & $12/100$ \\
IL & $22/100$ & $17/100$         & $13/100$ & $\mathbf{10/100}$ \\
MA & $6/100$  & $\mathbf{5/100}$ & $11/100$ & $35/100$ \\
GA & $7/100$  & $\mathbf{5/100}$ & $6/100$  & $22/100$ \\
AZ & $14/100$ & $12/100$         & $\mathbf{10/100}$ & $\mathbf{10/100}$ \\
\midrule
\textbf{Total} & $88/800\;(11.0\%)$ & $\mathbf{59/800\;(7.4\%)}$ & $76/800\;(9.5\%)$ & $119/800\;(14.9\%)$ \\
\bottomrule
\end{tabular}
\caption{ACSPublicCoverage, race: per-state chi-square rejection counts, that is, the number of runs (out of $100$) in which the test rejects $H_0$ (equal TPRs across the 5 race groups) at $\alpha = 0.05$. The last row pools across all $100 \times 8 = 800$ (seed, state) settings; since the eight states within a run share one trained classifier, this pooled figure is descriptive and is not used for inference.}
\label{tab:pubcov_chi2}
\end{table}

The F1 difference is small and depends on the state (Table~\ref{tab:pubcov_f1}). CNACR is the best on CA, NY, IL, MA and AZ, but it is the worst on TX, FL and GA, just like the experiment with the binary sex attribute.

The Gini coefficient results (Table~\ref{tab:pubcov_gini}) again show that CNACR has the lowest score on all 8 states, and the paired $t$-test shows the difference is significant against every baseline at $p < 10^{-4}$, with CNACR ahead on $98$ or $99$ of the $100$ runs. Table~\ref{tab:pubcov_chi2} reports the goodness-of-fit test rejection counts at $\alpha = 0.05$. CNACR has the lowest or joint-lowest count on 4 of the 8 states. Pooled over all $800$ (seed, state) settings, CNACR rejects the null hypothesis in $7.4\%$ of cases, which is again the lowest among all the methods.

We can see that the reduction method form the fairness constraint only on the CA training distribution, and once the deployment population changes they make fairness worse than a plain SVM. This shows an overfitting pattern of the fairness constraint to the source distribution. On the other hand, the other two baseline classifiers sit in between, neither helping nor hurting. The coverage rates across the groups are too close; therefore, the constraints of these two methods leave their classifiers almost unchanged. CNACR is the only method that improves the fairness in this situation.

\subsection*{Intersectional Groups: Sex $\times$ Race}

The two attributes above are treated separately. We now take their intersection, which gives $10$ protected groups. The coverage rates of the ten intersectional groups in California spread from $0.360$ to $0.394$, slightly wider than the spread ot the five race groups difference. Therefore, this does not introduce any new disparity in the raw data; it is still in the same situation as the previous two experiments. The experiment protocol is identical to the previous experiments: $100$ runs, training on only California, $N_{\mathrm{train}} = 20000$ and $N_{\mathrm{test}} = 5000$ per state.

Tables~\ref{tab:pubcov_sr_eor}--\ref{tab:pubcov_sr_chi2} report the results.

\begin{table}[!htbp]
\centering
\footnotesize
\setlength{\tabcolsep}{3pt}
\begin{tabular}{lccccc}
\toprule
\textbf{State} & \textbf{SVM} & \textbf{CNACR} & \textbf{Zafar} & \textbf{Donini} & \textbf{FL-EOp} \\
\midrule
CA & $0.5947$ & $\mathbf{0.7057}$ & $0.6070$ & $0.6150$ & $0.5873$ \\
NY & $0.6251$ & $\mathbf{0.6956}$ & $0.6064$ & $0.6019$ & $0.5748$ \\
MA & $0.5875$ & $\mathbf{0.6655}$ & $0.5722$ & $0.5512$ & $0.5246$ \\
AZ & $0.5659$ & $\mathbf{0.6436}$ & $0.5780$ & $0.5687$ & $0.5566$ \\
IL & $0.5774$ & $\mathbf{0.6618}$ & $0.5722$ & $0.5665$ & $0.5330$ \\
FL & $0.4853$ & $\mathbf{0.6087}$ & $0.4826$ & $0.4917$ & $0.4617$ \\
GA & $0.5573$ & $\mathbf{0.6835}$ & $0.5592$ & $0.5635$ & $0.5220$ \\
TX & $0.5252$ & $\mathbf{0.6583}$ & $0.5242$ & $0.5233$ & $0.4770$ \\
\midrule
\textbf{Pooled} & $0.5648$ & $\mathbf{0.6653}$ & $0.5627$ & $0.5602$ & $0.5296$ \\
\bottomrule
\end{tabular}
\caption{ACSPublicCoverage, sex $\times$ race ($10$ groups): per-state EO-ratio (mean over $100$ runs).}
\label{tab:pubcov_sr_eor}
\end{table}

\begin{table}[!htbp]
\centering
\footnotesize
\setlength{\tabcolsep}{3pt}
\begin{tabular}{lccccc}
\toprule
\textbf{State} & \textbf{SVM} & \textbf{CNACR} & \textbf{Zafar} & \textbf{Donini} & \textbf{FL-EOp} \\
\midrule
CA & $0.0519$ & $\mathbf{0.0347}$ & $0.0499$ & $0.0492$ & $0.0502$ \\
NY & $0.0378$ & $\mathbf{0.0286}$ & $0.0388$ & $0.0395$ & $0.0443$ \\
MA & $0.0357$ & $\mathbf{0.0270}$ & $0.0370$ & $0.0374$ & $0.0414$ \\
AZ & $0.0411$ & $\mathbf{0.0281}$ & $0.0407$ & $0.0416$ & $0.0442$ \\
IL & $0.0431$ & $\mathbf{0.0311}$ & $0.0432$ & $0.0419$ & $0.0461$ \\
FL & $0.0441$ & $\mathbf{0.0298}$ & $0.0447$ & $0.0439$ & $0.0485$ \\
GA & $0.0373$ & $\mathbf{0.0265}$ & $0.0385$ & $0.0402$ & $0.0576$ \\
TX & $0.0450$ & $\mathbf{0.0323}$ & $0.0461$ & $0.0468$ & $0.0526$ \\
\midrule
\textbf{Pooled} & $0.0420$ & $\mathbf{0.0298}$ & $0.0424$ & $0.0426$ & $0.0481$ \\
\bottomrule
\end{tabular}
\caption{ACSPublicCoverage, sex $\times$ race: per-state Gini coefficient on per-group TPRs (mean over $100$ runs).}
\label{tab:pubcov_sr_gini}
\end{table}

\begin{table}[!htbp]
\centering
\footnotesize
\setlength{\tabcolsep}{3pt}
\begin{tabular}{lccccc}
\toprule
\textbf{State} & \textbf{SVM} & \textbf{CNACR} & \textbf{Zafar} & \textbf{Donini} & \textbf{FL-EOp} \\
\midrule
CA & $0.6271$ & $\mathbf{0.6512}$ & $0.6242$ & $0.6241$ & $0.6172$ \\
NY & $0.6450$ & $\mathbf{0.6681}$ & $0.6408$ & $0.6403$ & $0.6299$ \\
MA & $0.6884$ & $\mathbf{0.7100}$ & $0.6840$ & $0.6828$ & $0.6729$ \\
AZ & $0.6610$ & $\mathbf{0.6627}$ & $0.6583$ & $0.6579$ & $0.6480$ \\
IL & $0.6927$ & $\mathbf{0.6999}$ & $0.6887$ & $0.6872$ & $0.6762$ \\
FL & $\mathbf{0.6785}$ & $0.6556$ & $0.6773$ & $0.6770$ & $0.6662$ \\
GA & $0.6914$ & $0.6620$ & $\mathbf{0.6916}$ & $0.6903$ & $0.6743$ \\
TX & $0.6745$ & $0.6398$ & $\mathbf{0.6746}$ & $0.6740$ & $0.6636$ \\
\midrule
\textbf{Pooled} & $\mathbf{0.6698}$ & $0.6687$ & $0.6674$ & $0.6667$ & $0.6560$ \\
\bottomrule
\end{tabular}
\caption{ACSPublicCoverage, sex $\times$ race: per-state F1 score (mean over $100$ runs).}
\label{tab:pubcov_sr_f1}
\end{table}

\begin{table}[!htbp]
\centering
\footnotesize
\setlength{\tabcolsep}{4pt}
\begin{tabular}{lccccc}
\toprule
\textbf{State} & \textbf{SVM} & \textbf{CNACR} & \textbf{Zafar} & \textbf{Donini} & \textbf{FL-EOp} \\
\midrule
CA & $14/100$ & $10/100$ & $11/100$ & $11/100$ & $\mathbf{5/100}$ \\
NY & $\mathbf{4/100}$ & $\mathbf{4/100}$ & $5/100$ & $5/100$ & $11/100$ \\
MA & $\mathbf{4/100}$ & $5/100$ & $5/100$ & $10/100$ & $17/100$ \\
AZ & $13/100$ & $\mathbf{8/100}$ & $\mathbf{8/100}$ & $9/100$ & $10/100$ \\
IL & $10/100$ & $10/100$ & $13/100$ & $\mathbf{4/100}$ & $13/100$ \\
FL & $8/100$ & $\mathbf{5/100}$ & $8/100$ & $11/100$ & $20/100$ \\
GA & $6/100$ & $\mathbf{4/100}$ & $7/100$ & $\mathbf{4/100}$ & $33/100$ \\
TX & $5/100$ & $5/100$ & $\mathbf{4/100}$ & $5/100$ & $16/100$ \\
\midrule
\textbf{Total} & $8.0\%$ & $\mathbf{6.4\%}$ & $7.6\%$ & $7.4\%$ & $15.6\%$ \\
\bottomrule
\end{tabular}
\caption{ACSPublicCoverage, sex $\times$ race: per-state chi-square rejection counts out of $100$ runs, at $\alpha=0.05$.}
\label{tab:pubcov_sr_chi2}
\end{table}

In terms of classification, the F1 score of CNACR is the best on 5 out of 8 states, and it is lossing on the same three states again. The pooled F1 score is very close to the plain SVM, so the fairness improvement is obtained at essentially no cost in classification quality.

On the fairness metrics, CNACR attains the best EO-ratio and the lowest Gini coefficient in each of the eight states. Moreover, we notice that the advantage here is larger than in the five-group race experiment. This is the same pattern showed in \cite{dentcheva2025risk}, where the risk-averse method's advantage grows with the number of classes. In this case, CNACR's advantage grows with the number of protected groups.

The baselines move in the opposite direction. On the Gini coefficient, both two fairness baselines are worse than the plain SVM. The same ordering appears in the goodness-of-fit rejection rate table. It seems that constraining ten group-conditional rates simultaneously, several of which are estimated from very few positive examples, may overfit the constraint.

\subsection*{Remarks on $\kappa$ Parameters}

CNACR is reported at $\kappa_{\mathrm{inner}} = 0.1$, $\kappa_{\mathrm{mid}} = 1$ and $\kappa_{\mathrm{out}} = 0.5$ throughout the experiments. Note that $\kappa$ has a range of $[0,1]$ for the mean-semi-deviation to be a coherent risk measure. Looking at the two outer layers of the CNACR structure, $\kappa_{\mathrm{out}}$ aggregates the risk of each class, whereas $\kappa_{\mathrm{mid}}$ aggregates the risk of each sensitive group. This means that, theoretically, $\kappa_{\mathrm{mid}}$ should be the parameter that controls the strength of fairness enforcement. To see how the result changes with the choice of $\kappa$ parameters, we vary one parameter at a time over $[0,1]$ with the other two held fixed at the default value, using $20$ runs. 

The experiment shows that the layers contribute very differently, and it depends on the data. When the group base rates differ, the middle level produces the improvement: on Adult dataset, where the raw data carries a relatively great unfairness, raising $\kappa_{\mathrm{mid}}$ from $0$ to $1$ increases the EO-ratio from $0.8508$ to $0.9498$ in the sex experiment, $0.7640$ to $0.8394$ in the race experiment. And the Gini coefficient also decreases from $0.0194$ to $0.0066$ in the sex experiment, $0.0132$ to $0.0083$ in the race experiment. However, on ACSPubcov dataset, where the base rate are very close across the groups, $\kappa_{\mathrm{mid}}$  barely does anything. Under all settings, increasing the value of $\kappa_{\mathrm{mid}}$ leads to almost zero change in the fairness metrics. 

The outer layer is also contributing to fairness enforcement. Changing $\kappa_{\mathrm{out}}$ from $0$ to $1$ decreases Gini coefficient from $0.0087$ to $0.0060$ in the Adult dataset using sex as the attribute, which is a much smaller change compared to changing $\kappa_{\mathrm{mid}}$ under the same setting. However, with more groups involved and more scenarios to be aggregated, $\kappa_{\mathrm{out}}$ is contributing much more. In the race experiment with the Adult dataset, increasing $\kappa_{\mathrm{out}}$ decreases Gini coefficient from $0.0113$ to $0.0054$, which is even more significant than changing $\kappa_{\mathrm{mid}}$. And in the ACSPubcov dataset, where $\kappa_{\mathrm{mid}}$ does almost nothing, changing $\kappa_{\mathrm{out}}$ from $0$ to $1$ decreases Gini coefficient from $0.0145$ to $0.0076$ in the sex experiment, $0.0311$ to $0.0140$ in the race experiment.

On the performance part, increasing $\kappa_{\mathrm{mid}}$ for stronger fairness enforcement almost never costs anything. Under all settings, increasing $\kappa_{\mathrm{mid}}$ decreases the F1 score by a small amount that is indistinguishable from $0$. Increasing $\kappa_{\mathrm{out}}$, on the other hand, slightly helps with the F1 score in the ACSPubcov experiment. But it does not make any significant changes in the Adult experiment. 

\subsection*{Within-Group Unfairness}

\begin{table}[!htbp]
\centering
\footnotesize
\setlength{\tabcolsep}{4pt}
\begin{tabular}{llcccccc}
\toprule
\textbf{Attribute} & \textbf{Noise} & \textbf{SVM} & \textbf{CNACR} & \textbf{Zafar} & \textbf{Donini} & \textbf{FL-EOp} & \textbf{FL-EOdds}\\
\midrule
\multirow{2}{*}{sex}  & $0\%$  & $0.08236$ & $\mathbf{0.08019}$ & $0.08382$ & $0.08257$ & $0.08315$ & $0.08883$ \\
                      & $20\%$ & $0.08236$ & $\mathbf{0.07970}$ & $0.08384$ & $0.08255$ & $0.08332$ & $0.08593$ \\
\midrule
\multirow{2}{*}{race} & $0\%$  & $0.08269$ & $\mathbf{0.07820}$ & $0.08317$ & --- & $0.08303$ & $0.08383$ \\
                      & $20\%$ & $0.08269$ & $\mathbf{0.07808}$ & $0.08315$ & --- & $0.08288$ & $0.08370$ \\
\bottomrule
\end{tabular}
\caption{Adult: within-group unfairness $I_{\mathrm{within}}$ under the Generalized Entropy decomposition of~\cite{speicher2018unified} (mean over $100$ runs). Lower is better. The method of~\cite{donini2018empirical} is defined for two groups only and is not used in the race experiment.}
\label{tab:ge_adult}
\end{table}

\begin{table}[!htbp]
\centering
\footnotesize
\setlength{\tabcolsep}{4pt}
\begin{tabular}{lccccc}
\toprule
\textbf{Attribute} & \textbf{SVM} & \textbf{CNACR} & \textbf{Zafar} & \textbf{Donini} & \textbf{FL-EOp}\\
\midrule
sex  & $0.14526$ & $\mathbf{0.12941}$ & $0.14563$ & $0.14549$ & $0.14531$ \\
race & $0.14444$ & $\mathbf{0.12942}$ & $0.14456$ & --- & $0.14498$ \\
\bottomrule
\end{tabular}
\caption{ACSPublicCoverage: within-group unfairness $I_{\mathrm{within}}$, pooled over the 8 test states (mean over $100$ runs). Lower is better.}
\label{tab:ge_pubcov}
\end{table}

All the measures used so far compare the true positive rates of the groups against each other, so they only see the group averages. \cite{speicher2018unified} point out that this can hide a second kind of unfairness: a method can equalize the group averages and at the same time make the treatment of the individuals inside a group more uneven. They give each individual a benefit score $b_i = \hat{y}_i - y_i + 1$, so a false negative receives $0$, a correct prediction receives $1$ and a false positive receives $2$, and measure the inequality of the benefit vector with the Generalized Entropy index at $\alpha = 2$. This index splits exactly into a between-group part and a within-group part,
\[
   I_{\mathrm{total}} \;=\; I_{\mathrm{between}} \;+\; I_{\mathrm{within}},
\]
where $I_{\mathrm{between}}$ replaces every individual by the mean benefit of their group and $I_{\mathrm{within}}$ combines the internal inequality of each group. We verified the identity numerically on every run and the largest residual was below $10^{-15}$. The between-group part measures the same thing as the fairness measures already reported in the previous tables, namely how the groups compare on average, so here we report only the within-group part, which the earlier measures cannot see.

Tables~\ref{tab:ge_adult} and~\ref{tab:ge_pubcov} give the results. CNACR is the only in-training method that reduces the within-group unfairness relative to the plain SVM, and it does so in all four settings: by $2.6\%$ and $3.2\%$ on Adult with sex, by $5.4\%$ and $5.6\%$ on Adult with race, and by about $11\%$ and $10\%$ on ACSPublicCoverage. Every constraint-based method increases it, in every setting, although the increases are small. This means that, for CNACR, the fairness improvements reported in the previous sections are not obtained at the cost of treating individuals inside a group more unevenly, which \cite{speicher2018unified} warn about for methods that equalize only the group averages.

\FloatBarrier

\bibliographystyle{plain}
\bibliography{bibliography}

\end{document}